\newif\ifproofread
\algrenewcommand\algorithmicrequire{\textbf{Input:}}
\algrenewcommand\algorithmicensure{\textbf{Output:}}
\newtheorem{problem}{Problem}
\newtheorem{definition}{Definition}
\newtheorem{proposition}{Proposition}
\title{\LARGE \bf
GeoDEx: A Unified Geometric Framework for Tactile Dexterous and Extrinsic Manipulation under Force Uncertainty
}
\author{Sirui Chen$^{1,*}$, Sergio Aguilera Marinovic$^{2}$, Soshi Iba$^{2}$ and Rana Soltani Zarrin$^{2}$
\thanks{$^{1}$Computer Science Department, Stanford University, Stanford, CA 94305, USA.
        {\tt\small ericcsr@stanford.edu}
       }%
\thanks{$^{2}$Honda Research Institute USA, San Jose, CA 95134, USA. corresponding author:
       {\tt\small \ rana\_soltanizarrin@honda-ri.com}}%
\thanks{$^{*}$This work was conducted during an internship at Honda Research Institute USA.}        
}
\begin{document}

\proofreadfalse

\maketitle
\thispagestyle{empty}
\pagestyle{empty}

\begin{abstract}
Sense of touch that allows robots to detect contact and measure interaction forces enables them to perform challenging tasks such as grasping fragile objects or using tools. Tactile sensors in theory can equip the robots with such capabilities. However, accuracy of the measured forces is not on a par with those of the force sensors due to the potential calibration challenges and noise. 
This has limited the values these sensors can offer in manipulation applications that require force control.
In this paper, we introduce GeoDEx, a unified estimation, planning, and control framework using geometric primitives such as plane, cone and ellipsoid, which enables dexterous as well as extrinsic manipulation in the presence of uncertain force readings. Through various experimental results, we show that while relying on direct inaccurate and noisy force readings from tactile sensors results in unstable or failed manipulation, our method enables successful grasping and extrinsic manipulation of different objects. Additionally, compared to directly running optimization using SOCP (Second Order Cone Programming), planning and force estimation using our framework achieves a 14x speed-up.
\end{abstract}


\section{Introduction}
Dexterous manipulation capabilities are essential for robots to function effectively in human-centered environments, such as helping with tasks that involve handling different objects or tools~\cite{zarrin2023hybrid}. Planning and control methods for such tasks that can enable robust and generalizable contact-rich manipulation need contact force information. While force sensors can provide accurate force readings, physical limitations associated with embedding the sensors into the robotic hands, as well as lack of high-resolution tactile information limit the use of these sensors. On the other hand, recent developments in tactile sensors resulted in ever lighter and higher resolution sensors, which can be installed on different robot end-effectors such as fingertips of a dexterous hand. However, it is still quite challenging and expensive to develop tactile sensors that can provide accurate force readings, especially in both normal and shearing directions. While developing such tactile sensors is important, the ongoing research efforts in the field of dexterous manipulation require relying on the available tactile sensors with the aforementioned shortcomings. Previous works have shown that even using binary~\cite{yuan2024robot} or highly discretized~\cite{qi2023general} information from tactile sensors can already significantly improve performance during in-hand object reorientation. Thus, it would be valuable if we could exploit full tactile sensor readings to perform force control for dexterous manipulation.

In this paper, we propose GeoDEx, a unified framework that fully utilizes imperfect tactile sensor readings and can be used for force planning and control in dexterous grasping and extrinsic manipulation. GeoDEx formulates finding equilibrium forces on multi-point contact problems as optimization problems with different geometric primitives such as planes, cones, and ellipsoids. It is designed to handle sensor noise as well as lack of access to shearing force reading. Through various experiments, we have shown that GeoDEx enables admittance control based dexterous grasping of different objects with various numbers of fingers as well as extrinsic manipulation such as pivoting a cuboid and a screwdriver.

\section{Related work}
\subsection{State of Tactile Sensors}
Tactile sensor arrays have shown great potential in robot manipulation, especially in dexterous manipulation~\cite{tactilereview}. While some works focus on building tactile gloves to collect force data from humans to teach robots ~\cite{natureglove, baoeskin}, other works use tactile sensors to directly help robots to sense and control contact in applications such as dense packing~\cite{robopack}, grasping fragile objects~\cite{yan2021soft} and pouring water with bimanual dexterous hand~\cite{lin2024learning}. As one of the state-of-the-art tactile sensor arrays, Xela sensors~\cite{xela} is compact, deformable, and can provide 3D force direction and magnitude readings with contact locations. They have been applied to slip prediction and contact state classification~\cite{mandil2022action, zhou2020learning}. However, force readings from those sensors are inaccurate due to their sensitivity to magnetic interference and the gap between taxels~\cite{xela}. Another flexible tactile sensor array, Touchlab sensor~\cite{Touchlab} uses small piezo-electric sensor arrays to detect 1D force in the normal direction, which can provide more accurate force readings due to less interference. However, we found its force readings are still far less accurate compared to traditional force torque sensors and directly using its data to perform force control will often result in control failure.
\subsection{Utilizing Tactile Readings}
Given noisy and sparse tactile sensor readings, prior methods focus on estimating force and contact points from raw data using analytical models such as perturbation observer~\cite{perturbobs}, sensor deformation analysis~\cite{sensordeform}, and predictive models~\cite{Niederhauser2024}. For camera-based tactile sensors, \cite{visualdigit} has built depth maps from raw images and regressive analytical models to extract force data from depth maps. With the help of more accurate force-torque sensors to provide data labeling, learning-based methods have also been applied to map raw sensor data into force readings~\cite{nvidialearning, su2015force}. Compared to these methods, instead of directly working with raw sensor reading, our method serves as a post-processing stage after having these inaccurate force readings extracted from raw sensor data. Processed force readings may still be different from ground truth contact force value but can be directly used by downstream controllers to perform tasks such as grasp force control in dexterous grasping and force control in extrinsic manipulation. 

Sharing the same spirit, another branch of work focuses on extracting salient information from tactile sensors. \cite{yuan2024robot} uses inexpensive film pressure sensors to provide binary contact signals for helping dexterous manipulation. \cite{qi2023general} only extract 3-bit discretized force direction from fingertip tactile sensors for dexterous object in-hand reorientation. Other works also simplify tactile sensor readings to only distinguish sliding and slipping using tactile sensor signals~\cite{meier2016distinguishing} or detect binary contact information.\cite{veiga2018grip}. Our work not only can detect contact from tactile sensors, but it can also provide relative force magnitudes at different contact points, which can be used for the challenging tasks mentioned above.

\subsection{Tactile-Enabled Dexterous Manipulation}
Utilizing force information in planning and control for dexterous manipulation enables more robust manipulation by considering the quality of grasps in realizing object motions~\cite{li2016dexterous, zarrin2023hybrid}. Similarly, successful extrinsic manipulation, such as tilting and flipping objects, requires precise control of contact forces to prevent undesired sliding~\cite{chen2023synthesizing,cheng2022contact,wu2020r3t,lee2015hierarchical}. Most of the existing works focus on contact force and position planning and validate the method in simulation only~\cite{chen2023synthesizing, wu2020r3t, lee2015hierarchical}. \cite{oller2024tactile} performed hardware experiments on flipping and rotating different objects; however, their work relies on a complex manipulator setup, including both a force-torque sensor and a bubble tactile sensor, and interacts with the objects only using a single rigid object. Our work enables a dexterous hand equipped with tactile fingertips to directly interact with different objects, which opens up different capabilities of extrinsic manipulation as we can locate and control multiple contacts with different fingers.
\section{Formulation and Methodology}
\begin{figure}[htb]
  \centering
  \includegraphics[width=0.65\linewidth]{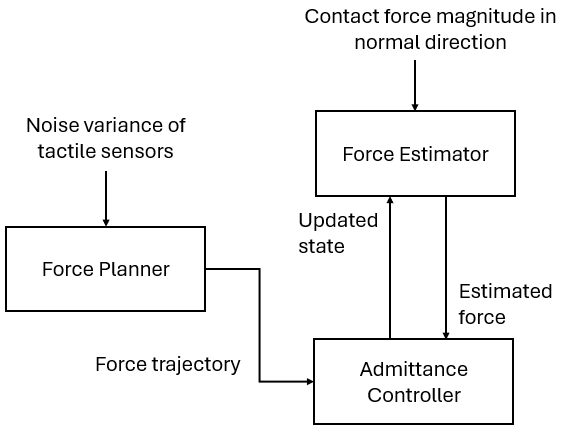}
  \caption{\textbf{System diagram of our proposed method}}
  \label{fig:overview}
\end{figure}
Our framework consists of three major components as shown in Fig.\ref{fig:overview}: a force planner that generates robust plans for the finger-object contacts with consideration of sensor error; a force estimator that uses tactile sensor reading, the robot state and the object pose to estimates all contact forces that would achieve force equilibrium under quasi-static assumptions; and a admittance controller that can track the planned kinematics and force trajectories robustly using estimated contact forces. In this section, we will first define the necessary concepts for our theoretical framework, and then use these concepts to address the problems of how to obtain force estimation and perform force planning with force observation uncertainty. We will end by describing the control architecture of our framework. The main notations used in this section are defined in Tab. \ref{tab:notations}

\begin{table}[tb!]
\caption{Nomenclature of the Proposed Framework}
\centering

 \begin{tabular}{l l} 
 \toprule
 Name & Notation \\ [0.5ex] 
 \midrule
 Number of extrinsic contacts & $n_\text{e}$ \\

 Number of intrinsic contacts & $n_\text{i}$ \\
 

 $i^{th}$ contact normal & $\bm{n}_i\in\mathbb{R}^{3}$ \\


 All contact forces & $\bm{f}\in\mathbb{R}^{(n_\text{i}+n_\text{e}) \times 3}$ \\

 Gravity wrench & $\bm{g}\in\mathbb{R}^6$ \\
 
 Extrinsic contact set & $\mathcal{C}_\text{ext}$ \\
 
 Intrinsic contact set & $\mathcal{C}_\text{int}$ \\

 Space of contact forces & $\mathcal{F}$ \\

 Force equilibrium constraints matrix & $\bm{A}_{fe}$ \\

 FE-basis & $\bm{B}_{fe}$ \\

 Coordinate in FE-basis & $\bm{x}_{fe}$ \\
 
 Constraint matrix & $\bm{C}$ \\

 Measurement cone edge weight & $\bm{m}$ \\

 Coordinate transformation matrix & $\bm{P}_{fe}$ \\
 \bottomrule
\end{tabular}
\label{tab:notations}
\end{table}

\subsection{Definitions}
We consider manipulation using a multi-finger dexterous hand; each fingertip is wrapped with a tactile sensor array that can provide contact location and contact force in the normal direction. We first define a contact $c$ on the object as a tuple of contact position, contact normal, and contact force. We denote the set of contacts between objects and tactile-enabled fingertips as intrinsic contacts $\mathcal{C}_\text{int}$. For the set of contacts between the object and the environment or other parts of the dexterous hand without tactile sensors, we denote it as extrinsic contact set $\mathcal{C}_\text{ext}$. We only consider quasi-static manipulation under gravity without other external forces, where the object is in force equilibrium, and all contact forces should cancel gravity all the time. We define key concepts for our paper as follows:
\begin{definition}[Space of contact forces]
Given a set of contact points including $n_i$ intrinsic and $n_e$ extrinsic contact points, we stack all forces into a vector $\bm{f}\in \mathbb{R}^{(n_i+n_e)\times 3}$, we define space of all possible forces as space of contact force $\mathcal{F}$.
\end{definition}
\begin{definition}[FE-plane]
All forces that can balance the object's gravity lie in a hyperplane in $\mathcal{F}$; we define this hyperplane as a Force-Equilibrium plane (FE-plane). Forces on FE-plane satisfy linear force equilibrium constraints:
\begin{equation}    
\bm{A}_{fe}^T\bm{f} = -\bm{g}
\label{eq:FE}
\end{equation}

Where $\bm{g}\in\mathbb{R}^6$ is gravity wrench and constraint matrix $\bm{A}_{fe}$ has the shape of $((n_i+n_e)\times3,6)$. As the number of effective constraints imposed on space of contact force is $\text{rank}(A_{fe})$, FE-plane has the dimension of $d_{fe} = (n_i+n_e)\times3-\text{rank}(A_{fe})$. 
\end{definition}
\begin{proposition}[FE-basis]
On the FE-plane, we can construct a set of orthonormal vectors that define a basis $\bm{B}_{fe}=\{\bm{b}_1,...,\bm{b}_{d_{fe}}\}, \bm{b}_i\in\mathbb{R}^{(n_i+n_e)\times3}$. Using this FE-basis, we can represent any force vector on the FE-plane using a weighted sum of these basis vectors $\bm{f}_{fe}=\sum_i^{d_{fe}}w_i\bm{b}_i+\bm{f}_0$, where $\bm{f}_0$ is a bias term, a particular solution on force equilibrium plane and the weight vector would be the FE-plane coordinate (FE-coord) $\bm{x}_{fe}$. Using the FE-basis we can construct a matrix combining subspace projection matrix and coordinate transformation matrix: $\bm{P}_{fe}=[\bm{b}_1;...;\bm{b}_{d_{fe}}]$ that project any vector in force space to FE-plane and express it in FE-coordinate: $\bm{x}_{fe} = \bm{P}_{fe}(\bm{f}-\bm{f}_0)$.
\end{proposition}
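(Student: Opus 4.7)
The plan is to recognize the FE-plane as an affine subspace of the force space $\mathcal{F}=\mathbb{R}^{(n_i+n_e)\times 3}$ and then exhibit an explicit orthonormal basis of its direction via standard linear-algebra tools. First I would separate the affine part from the linear part: the set of solutions of $\bm{A}_{fe}^T\bm{f}=-\bm{g}$ is either empty or of the form $\bm{f}_0+\mathcal{N}$, where $\bm{f}_0$ is any particular solution and $\mathcal{N}=\ker\bm{A}_{fe}^T$. Feasibility of equilibrium is assumed throughout the paper (otherwise the manipulation problem is ill-posed), so such an $\bm{f}_0$ exists; by the rank-nullity theorem, $\dim\mathcal{N}=(n_i+n_e)\times 3-\text{rank}(\bm{A}_{fe})=d_{fe}$, matching the dimension given in the FE-plane definition.

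Next I would construct the basis. Since $\mathcal{N}$ is a linear subspace of a finite-dimensional inner-product space, Gram--Schmidt applied to any basis of $\mathcal{N}$ (obtained, for instance, from the SVD or a QR decomposition of $\bm{A}_{fe}$) yields orthonormal vectors $\bm{b}_1,\ldots,\bm{b}_{d_{fe}}\in\mathbb{R}^{(n_i+n_e)\times 3}$ spanning $\mathcal{N}$. By construction, every $\bm{f}$ on the FE-plane satisfies $\bm{f}-\bm{f}_0\in\mathcal{N}$, so there exist unique coefficients $w_i$ with
\begin{equation*}
\bm{f}=\bm{f}_0+\sum_{i=1}^{d_{fe}}w_i\bm{b}_i,
\end{equation*}
establishing the claimed representation with FE-coordinate $\bm{x}_{fe}=(w_1,\ldots,w_{d_{fe}})^T$.

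Finally I would verify the projection formula. Stacking the orthonormal basis vectors as rows produces $\bm{P}_{fe}=[\bm{b}_1;\ldots;\bm{b}_{d_{fe}}]\in\mathbb{R}^{d_{fe}\times(n_i+n_e)\cdot 3}$, which satisfies $\bm{P}_{fe}\bm{P}_{fe}^T=\bm{I}_{d_{fe}}$. For any $\bm{f}$ on the FE-plane, orthonormality gives $w_i=\bm{b}_i^T(\bm{f}-\bm{f}_0)$, hence $\bm{x}_{fe}=\bm{P}_{fe}(\bm{f}-\bm{f}_0)$. For a general $\bm{f}\in\mathcal{F}$, the same formula returns the FE-coordinate of the orthogonal projection of $\bm{f}$ onto the FE-plane, which justifies calling $\bm{P}_{fe}$ simultaneously a subspace projection and a coordinate transform.

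The only subtle obstacle is the implicit feasibility assumption, i.e.\ that $-\bm{g}$ lies in the column space of $\bm{A}_{fe}^T$ so that $\bm{f}_0$ exists. This is guaranteed under the quasi-static equilibrium assumption stated above the proposition, and in practice is ensured by the contact configurations considered (enough contacts with spanning normals to resist gravity); beyond that, all remaining steps are routine applications of the rank-nullity theorem, Gram--Schmidt orthogonalization, and the orthogonal-projection identity for orthonormal frames.
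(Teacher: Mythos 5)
Your proof is correct and matches the paper's (largely implicit) approach: the paper states this proposition without a formal proof, but its implementation note that ``the constraint matrix $A_{fe}$ is computed through single value decomposition to construct the basis'' is exactly your construction of an orthonormal basis for $\ker\bm{A}_{fe}^T$, with the affine offset $\bm{f}_0$ as a particular solution and the coordinate formula following from orthonormality. Your explicit appeal to rank--nullity (recovering $d_{fe}$) and your flagging of the feasibility assumption on $-\bm{g}$ are welcome additions that the paper leaves unstated.
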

\begin{definition}[Constraint convex set]
The constraint convex set includes friction cone constraints and minimum force constraints for intrinsic contacts. Using a pyramidal friction cone, all constraints can be defined as linear inequality constraints in force coordinates; hence, the interior of the constraint set is always convex.
\begin{equation}
\bm{C}\bm{f}\geq\bm{d}
\end{equation}
Where constraint matrix is constructed from stacking linear friction cone constraints and linear minimum force constraints $\bm{C} = [\bm{C}_\text{frc}^T,\bm{C}_\text{minf}^T]$ and $\bm{d} = [\bm{0}_{[4\times(n_e+n_i)]},\bm{f}_\text{min}]$. We can also represent this constraint convex set in FE-coordinate when setting $\bm{C}_{fe} = \bm{P}_{fe}\bm{C}$:
\begin{equation}
\bm{C}_{fe}\bm{x}_{fe} \geq \bm{d}
\label{eq:FE_constraint}
\end{equation}
\end{definition}
\begin{proposition}[Perservation of force equilibrium]
For any force $\bm{f}\in\mathcal{F}$ on FE-plane, applying a delta force $\Delta\bm{f}$ on the plane will result in a new force $\Tilde{\bm{f}} = \bm{f}+\Delta\bm{f}$ still on FE-plane.
\end{proposition}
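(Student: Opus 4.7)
The plan is to unpack the phrase ``delta force on the plane'' in the sense suggested by the preceding proposition on the FE-basis: a $\Delta\bm{f}$ that lies in the linear direction space of the FE-plane, i.e.\ $\Delta\bm{f}\in\mathrm{span}(\bm{B}_{fe})$, rather than on the affine FE-plane itself (the latter reading would be inconsistent, since adding two affine-plane points gives $-2\bm{g}$ on the right-hand side of \eqref{eq:FE}). Once this interpretation is fixed, the result is a one-line linear-algebra argument and the whole proof is essentially routine.

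First I would note that, by construction, the FE-basis $\bm{B}_{fe}=\{\bm{b}_1,\dots,\bm{b}_{d_{fe}}\}$ spans the null space of $\bm{A}_{fe}^T$. This is immediate from the dimension count given with the definition of FE-plane: the null space of $\bm{A}_{fe}^T$ has dimension $(n_i+n_e)\times 3-\mathrm{rank}(\bm{A}_{fe})=d_{fe}$, and the $\bm{b}_i$ are $d_{fe}$ orthonormal vectors parallel to the FE-plane, hence a basis of that null space. Therefore $\bm{A}_{fe}^T\bm{b}_i=\bm{0}$ for every $i$.

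Next I would write $\Delta\bm{f}=\sum_{i=1}^{d_{fe}} w_i\,\bm{b}_i$ for some weights $w_i$ and compute, by linearity,
\begin{equation}
\bm{A}_{fe}^T\Delta\bm{f}=\sum_{i=1}^{d_{fe}} w_i\,\bm{A}_{fe}^T\bm{b}_i=\bm{0}.
\end{equation}
Combining this with the hypothesis $\bm{A}_{fe}^T\bm{f}=-\bm{g}$ yields
\begin{equation}
\bm{A}_{fe}^T\tilde{\bm{f}}=\bm{A}_{fe}^T(\bm{f}+\Delta\bm{f})=-\bm{g}+\bm{0}=-\bm{g},
\end{equation}
which is exactly the FE constraint \eqref{eq:FE}, so $\tilde{\bm{f}}$ lies on the FE-plane.

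There is no real obstacle here; the only place a reader could stumble is the informal phrase ``delta force on the plane,'' so I would make a point of stating explicitly at the outset that $\Delta\bm{f}$ is a tangent vector to the affine FE-plane, equivalently an element of $\mathrm{span}(\bm{B}_{fe})$, and that this is the only sensible reading consistent with the preceding definitions. After that clarification the proof reduces to the two displayed equations above.
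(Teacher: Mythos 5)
Your proof is correct and follows essentially the same idea as the paper's one-line argument (closure of the span of the FE-basis under addition), but you make it rigorous where the paper is loose: you correctly identify that $\Delta\bm{f}$ must be read as a direction vector in $\mathrm{span}(\bm{B}_{fe})=\ker(\bm{A}_{fe}^T)$ rather than a point on the affine plane, and then verify $\bm{A}_{fe}^T\tilde{\bm{f}}=-\bm{g}$ explicitly. The paper's own proof ("both vectors have the same basis, so their sum is in the plane") glosses over exactly this affine-versus-linear distinction, so your clarification is a genuine improvement rather than a deviation.
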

\begin{proof}
As both original force and delta forces are two vectors with the same basis, adding these two vectors will also result in a vector in the plane spanned by the same basis.
\end{proof}

For intrinsic contact, we can only measure the magnitude of non-negative contact force along the contact's normal direction. Therefore, assume there are only intrinsic contacts; all possible measurements lie inside a cone in force space. When extrinsic contacts are present, we can also assume there is a virtual sensor attached to the contact point that can measure force in the normal direction, but we cannot obtain readings from those sensors. Mathematically, we represent all possible measurements on all contact points as measurement cone (M-Cone)
\begin{definition}[M-Cone]
Let $\{\bm{n}_1,\bm{n}_2,..,\bm{n}_{n_i+n_e}\}$ be the set of contact normal vector. We construct M-Cone in force space as the weighted summation of contact normal vectors:
\begin{equation}
\mathcal{M} = \{\bm{f}:\exists \bm{m}, \bm{f}=\sum_j^{n_i+n_e} m_j\textbf{Extend}(\bm{n}_j)\}
\end{equation}
$\textbf{Extend}$ operator writes contact normal in full force vector dimension, elements corresponding to this particular contact normal are set to $\bm{n}_i$ while other elements are set to zero. For example, in the case there are 3 contact points $\textbf{Extend}(\bm{n}_1) = [n_1^x;n_1^y;n_1^z;0;0;0;0;0;0]$ and $\textbf{Extend}(\bm{n}_2) = [0;0;0;n_2^x;n_2^y;n_2^z;0;0;0]$. 
The measurement cone must go through the origin in force space when all weights $m_i=0,i=1,..,n_i+n_e$.

\begin{figure}[!h]
  \centering
  \includegraphics[width=0.8\linewidth]{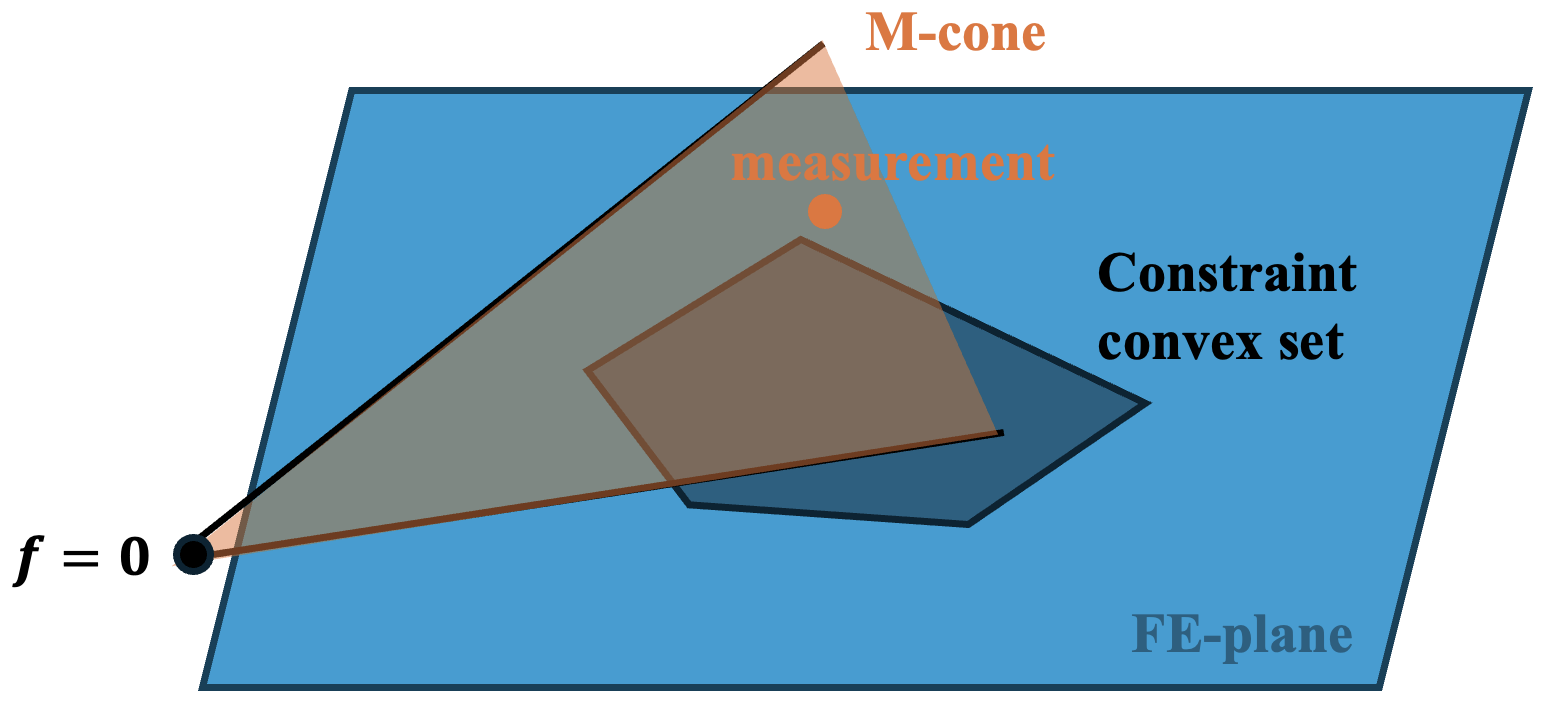}
  \caption{\textbf{FE-plane, M-Cone and Constraint convex set}}
  \label{fig:visualize}
\end{figure}
Fig.\ref{fig:visualize} provides a visualization on the FE-plane, M-Cone and constraint convex set in 3D as conceptual illustration.
\end{definition}

\subsection{Force Estimation}
\label{sec:Force_estimation}

Built upon these concepts, we can solve both force estimation and planning problems for different scenarios when there are different numbers of intrinsic and extrinsic contact points.
\begin{problem}[Force estimation for dexterous grasping]
We consider the problem of finding the best force estimate in dexterous grasping. This problem involves only intrinsic contacts and to achieve any target wrench, at least two intrinsic contact points are needed ($n_i\ge 2$)
\end{problem}
To solve this problem, we need to find a force $\bm{f}_\text{est}$ on the FE-plane that has a minimum Euclidean distance to the observed normal forces $\bm{f}_\text{m}$ on M-Cone. We can achieve this by projecting $\bm{f}_\text{m}$ on to FE-plane:
\begin{equation}
\begin{split}
\bm{x}_\text{est} = \bm{P}_{fe}(\bm{f}_m - \bm{f}_0) \\
\bm{f}_\text{est} = \bm{P}_{fe}^T\bm{x}_\text{est}+\bm{f}_0
\end{split}
\end{equation}
Although under ideal conditions, an observation is obtained by projecting from the  FE-plane to the measurement cone. However, in reality, force estimation under this assumption requires back-tracing projection to FE-plane, which will magnify normal force measurement error. Our projection allows changes to normal force magnitude and practically gives similar results as we will show in the experimental section.
\begin{problem}[Force estimation for extrinsic manipulation]
In extrinsic manipulation, there exist non-zero extrinsic contact points ($n_e>0$), which don't have normal force readings. We need to estimate intrinsic forces based on intrinsic and extrinsic contact locations as well as normal force readings from intrinsic contacts.
\end{problem}
\begin{figure}[!h]
  \centering
  \includegraphics[width=0.6\linewidth]{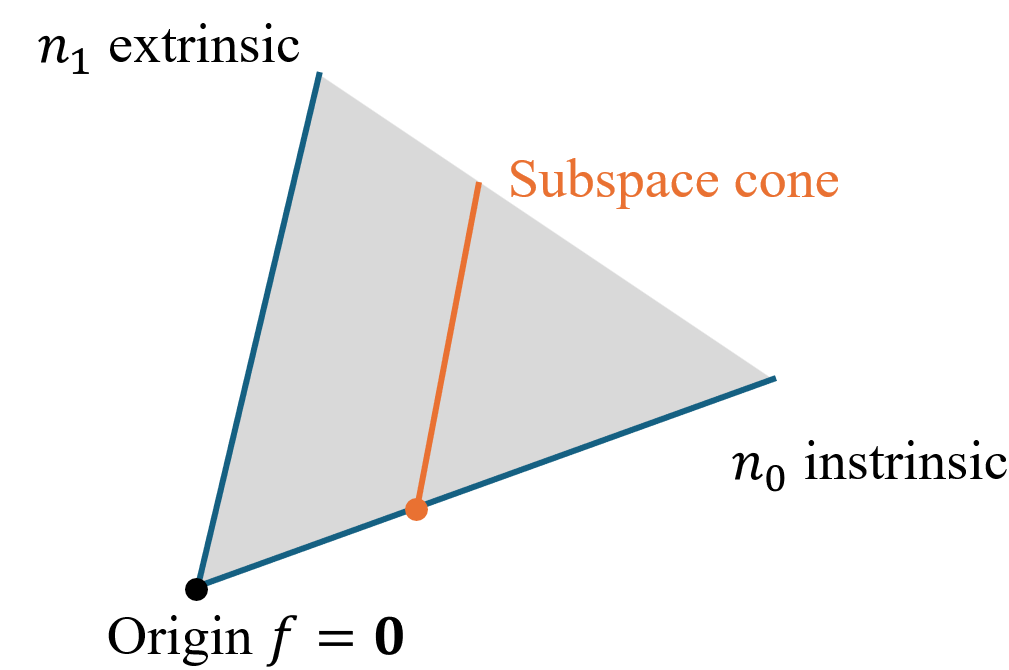}
  \caption{\textbf{Illustration of measurement sub-space cone} Assume two contact points are involved, one intrinsic, one extrinsic. Orange ray shows measurement sub-cone as we can only measure force magnitude at the intrinsic contact location.}
  \label{fig:ext_subcone}
\end{figure}
Only from intrinsic force measurement we can no longer obtain a point of measurement on M-Cone; instead, all possible measurement spans a sub-space cone on measurement cone as shown in Fig.\ref{fig:ext_subcone}. To find the best force estimate, we need to find a force $\bm{f}_\text{est}$ on the FE-plane that is close to this sub-space cone. Meanwhile, as extrinsic contact points only provide minimum contact forces to counteract gravity and intrinsic contact forces, such as intrinsic contact forces and gravity, we also need $\bm{f}_\text{est}$ to have minimum magnitude, i.e. close to the origin. Therefore, to estimate contact forces, we need to solve a quadratic program (QP) that minimizes the sum of the distance to the sub-space cone and the origin. To lower the complexity of solving QP, we represent $\bm{f}_\text{est}$ using FE-coord $\bm{x}_\text{est}$, thus distance to the origin can be expressed as:
\begin{equation}
d_\text{origin} = ||\bm{P}_{fe}^T\bm{x}_\text{est}||_2\\
\end{equation}
To obtain the distance to the sub-space cone, we first need to construct basis $\bm{B}_\text{sub}$ for this measurement sub-space cone. As each extrinsic contact point contributes to one independent DoF in the sub-space cone, we first compute $n_e$ linearly independent force vectors with each one containing non-zero terms from one of the extrinsic contact normals and zeros in other directions $\bm{f}^i_\text{sub}$. $\bm{B}_\text{sub}$ can be constructed from $\{\bm{f}^i_\text{sub}\}$ using QR decomposition. After obtaining force vectors, we can find a weight $\bm{w}$ of linear combination $w_i\bm{f}^i_\text{sub}+...+w_i\bm{f}^{n_e}_\text{sub}$, and force estimate can be expressed as a projection of this linear combination onto the FE-plane $\bm{f}_\text{est} = \bm{P}_{fe}\sum_i^{n_e} w_i\bm{f}^i_\text{sub}$. To get the best estimate, we need to solve $\bm{w}$ that minimizes the distance to the origin and subspace plane spanned by $\{\bm{f}^i_\text{sub}\}$:
\begin{equation}
\begin{split}
\min_{\bm{w}} ||\bm{f}_\text{est}|| + ||\bm{B}_\text{sub}\bm{f}_\text{est}|| \\
s.t. \bm{f}_\text{est} = \bm{P}_{fe}\sum_i^{n_e} w_i\bm{f}^i_\text{sub}
\end{split}
\end{equation}

\subsection{Force Planning}
\label{sec:Force_planning}
Using above mentioned method, we can obtain a force measurement that is guaranteed to achieve force closure but may have errors compared to ground truth force. If we have prior knowledge of the noise range of each tactile sensor, we can bound the error using an ellipsoid on FE-plane. Therefore, when planning forces for dexterous grasping or extrinsic manipulation, we need to consider the potential error and make a robust plan that remains feasible regardless.

\begin{definition}[Trusted measurement ellipsoid]
For each contact point, assume normal force measurement error is independent and obeys Gaussian distribution $\epsilon_i\sim\mathcal{N}(\sigma_i, 0)$. If we consider measurement within 1$\sigma$ from ground truth as a trusted region, all trusted measurements can be parameterized as an ellipsoid on M-cone
\begin{equation}
\begin{split}
\bm{m}^T\bm{D}\bm{m}\le 1\\
\bm{D} = \text{Diag}(1/\sigma^2_0,...,1/\sigma^2_{n_i+n_e})
\end{split}
\end{equation}
When there are extrinsic contacts, we set the variance of extrinsic contacts to infinity. While the quadratic formula still holds, the shape becomes an ellipsoid as one axis has been infinitely elongated.
\end{definition}

\begin{proposition}{(Propagating ellipsoid to FE-plane)}
Given an ellipsoid with center $\bm{c}$ expressed in M-cone basis, we can project it onto the force equilibrium plane using a projection matrix $\bm{E}$
\begin{equation}
\begin{split}
(\bm{x}_{fe}-\bm{c})^T\bm{M}(\bm{x}_{fe}-\bm{c})\\
\bm{M}=(\bm{E}^\dagger)^T\bm{D}\bm{E}^\dagger\\
\end{split}
\end{equation}
To construct measurement projection matrix $M$ we consider the following: Any force vector on measurement cone can be expressed as $\bm{f} = \sum^{n_i+n_e}_im_i\textbf{Extend}(\bm{n}_i) = \bm{N}_{M-cone}\bm{m}$ where $\bm{N}$ is constructed from extended normal vectors. We project this force to FE-plane:
\begin{equation}
\begin{split}
\bm{x}_{fe} = \bm{P}_{fe}\bm{N}_{M-cone}\bm{m}\\
\bm{x}_{fe} = \hat{\bm{E}}\bm{m}\\
\end{split}
\end{equation}
As the system is under-determined, we need to set $\hat{\bm{E}} = \bm{Q}\bm{E}; Q,R = QR(E^T)$ (QR decomposition) to get minimum norm solution when computing the projection.
\end{proposition}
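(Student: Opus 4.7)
The plan is to derive the projected ellipsoid equation by a straightforward change of variables using the linear relation $\bm{x}_{fe} = \hat{\bm{E}}\bm{m}$ with $\hat{\bm{E}} = \bm{P}_{fe}\bm{N}_{M-cone}$ that the proposition's preamble already establishes. First I would rewrite the M-cone ellipsoid in centered form $(\bm{m}-\bm{m}_c)^T \bm{D} (\bm{m}-\bm{m}_c) \le 1$, where $\bm{m}_c$ denotes the weight vector associated with the center $\bm{c}$, so that $\bm{c} = \hat{\bm{E}}\bm{m}_c$ by the same transport.

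Next I would invert the map. Because $\hat{\bm{E}}$ is typically wide (one M-cone weight per contact, often more than the FE-plane dimension $d_{fe}$), I would use the Moore-Penrose pseudoinverse to write $\bm{m} = \hat{\bm{E}}^\dagger \bm{x}_{fe}$ and $\bm{m}_c = \hat{\bm{E}}^\dagger \bm{c}$, which among all preimages picks the minimum-norm one. The QR decomposition $\hat{\bm{E}}^T = \bm{Q}\bm{R}$ in the paper provides a numerically stable way to form $\hat{\bm{E}}^\dagger$: the columns of $\bm{Q}$ furnish an orthonormal basis of the row space of $\hat{\bm{E}}$, so the pseudoinverse reduces to a triangular solve against $\bm{R}$. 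Substituting $\bm{m}-\bm{m}_c = \hat{\bm{E}}^\dagger(\bm{x}_{fe}-\bm{c})$ into the centered quadratic form yields $(\bm{x}_{fe}-\bm{c})^T (\hat{\bm{E}}^\dagger)^T \bm{D} \hat{\bm{E}}^\dagger (\bm{x}_{fe}-\bm{c}) \le 1$, matching the claimed $\bm{M}=(\hat{\bm{E}}^\dagger)^T \bm{D} \hat{\bm{E}}^\dagger$.

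The main obstacle is conceptual rather than computational: justifying that using the minimum-norm preimage is the appropriate projection convention, rather than taking the pushforward image of the M-cone ellipsoid (which would instead give $(\hat{\bm{E}}\bm{D}^{-1}\hat{\bm{E}}^T)^{-1}$, a formula that only coincides with $\bm{M}$ when $\hat{\bm{E}}$ is square and invertible). The convention is appropriate here because each FE-plane point should be tied to the least-norm M-cone weight explaining it, which is consistent with treating $\bm{m}$ as a calibrated sensor observation and measuring trust via the diagonal $\bm{D}=\text{Diag}(1/\sigma_i^2)$. I would close by noting that $\bm{M}$ is positive semidefinite because $\bm{D}$ is positive on the intrinsic directions and $\hat{\bm{E}}^\dagger$ has full column rank on the image of $\hat{\bm{E}}$, so the level set $(\bm{x}_{fe}-\bm{c})^T\bm{M}(\bm{x}_{fe}-\bm{c}) \le 1$ is indeed a bona fide ellipsoid on the FE-plane (degenerating along the infinite-variance directions introduced by extrinsic contacts, as anticipated in the preceding definition).
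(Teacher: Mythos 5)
Your derivation follows essentially the same route as the paper: the proposition's own construction of $\hat{\bm{E}} = \bm{P}_{fe}\bm{N}_{M\text{-}cone}$ together with the QR-based pseudoinverse \emph{is} the paper's entire argument, and your centered change of variables $\bm{m}-\bm{m}_c = \hat{\bm{E}}^\dagger(\bm{x}_{fe}-\bm{c})$ is exactly the substitution the paper leaves implicit when writing $\bm{M}=(\bm{E}^\dagger)^T\bm{D}\bm{E}^\dagger$. Your added remarks --- distinguishing the pullback (minimum-norm preimage) convention from the pushforward image $(\hat{\bm{E}}\bm{D}^{-1}\hat{\bm{E}}^T)^{-1}$, and verifying positive semidefiniteness so the level set is a genuine (possibly degenerate) ellipsoid --- go beyond what the paper states but are consistent with its construction.
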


\begin{problem}{(Force planning for dexterous grasping)}
Given a set of intrinsic contact points, planning a safe dexterous grasp requires solving center $c$ of a trusted measurement ellipsoid on the force-closure plane such that the entire ellipsoid satisfies friction cone and minimum force constraints.
\begin{equation}
\begin{aligned}
&\min_{\bm{c}}\ 0 \\
&s.t.\forall \bm{x}_{fe}, (\bm{x}_{fe}-\bm{c})^T\bm{M}(\bm{x}_{fe}-\bm{c})\le1\\
&\bm{C_{fe}}\bm{x}_{fe} \ge \bm{d}
\end{aligned}
\label{eq:grasp}
\end{equation}
\end{problem}
Directly considering the analytical ellipsoid is challenging, thus we rewrite constraints of $\bm{x}_{fe}$ as constraints on $\bm{c}$, for each linear constraint of from constraint matrix $\bm{C}_{fe}$
\begin{equation}
\bm{C}_{fe}[i]\bm{x}_{fe}\ge d_i
\end{equation}
It can be rewritten as
\begin{equation}
-\bm{C}_{fe}[i]^T\bm{c}+\sqrt{-\bm{C}_{fe}[i]^T\bm{M}^{-1}-\bm{C}_{fe}[i]}\le -b_i
\end{equation}
We can then write constraints on the ellipsoid center as:
\begin{equation}
\begin{aligned}
\Tilde{\bm{C}}_{fe}[i] &= - \bm{C}_{fe}[i]\\
\Tilde{\bm{b}}[i] &= -\bm{b} - \sqrt{-\bm{C}_{fe}[i]^T\bm{M}^{-1}-\bm{C}_{fe}[i]}
\end{aligned}
\end{equation}
Therefore, we can solve the following linear feasibility problem of $\bm{c}$
\begin{equation}
\begin{aligned}
&\min_{\bm{c}}\ 0 \\
&s.t. \Tilde{\bm{C}}_{fe}\bm{c}\le\Tilde{b}\\
\end{aligned}
\label{eq:grasp_reform}
\end{equation}
The force for each fingertip can be obtained by projecting optimal $\bm{x}^*_{fe}$ back to the space of contact force.

\begin{problem}{(Force planning for extrinsic manipulation)}
Given a set of intrinsic contact points and a set of extrinsic contact points, planning a set of safe intrinsic forces can be formulated similar to \ref{eq:grasp} with a cost term added to minimize total force magnitude
\begin{equation}
\begin{aligned}
&\min_{\bm{c}}\ \bm{P}^T_{fe}\bm{c} + \bm{x}_0 \\
&s.t.\forall \bm{x}_{fe}, (\bm{x}_{fe}-\bm{c})^T\bm{M}(\bm{x}_{fe}-\bm{c})\le1\\
&\bm{C_{fe}}\bm{x}_{fe} \ge \bm{b}\\
\end{aligned}
\end{equation}
\end{problem}

To solve the problem, we can apply a similar re-formulation as in \ref{eq:grasp_reform}, because we don't have all contact force readings from the tactile sensor, matrix $\bm{M}$ becomes low rank and pseudo inversion should be applied in the place of the matrix inverse.

\subsection{Controller}
\label{sec:controller}
The control loop architecture is presented in Fig.~\ref{fig:controller_basic}. Starting with the plant, we consider the system comprised of the Hand-Object-Environment, where we can control the robotic hand desired joint positions, which will affect the interaction forces between the fingertips and the object, and between the object and the environment. The state of the system will be given by the hand's joint angles $\bm{q}$, the contact forces $\bm{f}$ and the object's pose. We use the fingertips' tactile sensor to measure the intrinsic contact forces. Due to the sensor's measurement error we use the proposed force estimator in \ref{sec:Force_estimation} to get an improved force observation that we use to close the control loop. Considering the initial multi-finger grasp,  we can compute the initial contacts' locations and normals, and combine them with the object's properties to compute the desired force trajectory that will feed the controller desired force $\bm{f}_{des}$ as described in \ref{sec:Force_planning}.
\begin{figure}
    \centering
    \includegraphics[width=0.9\linewidth]{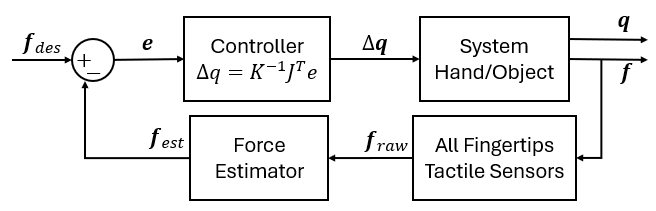}
    \caption{Block diagram of the control system.}
    \label{fig:controller_basic}
\end{figure}

We use the error $\bm{e}$ between the desired forces and the observations at each contact point along with the fingertip's Jacobian $J$ to compute direction of motion of the fingertips as
\begin{equation}
    \Delta \bm{q} = K^{-1}J^T(\bm{q})\bm{e}
\end{equation}
Where $K$ is a positive-definite gain matrix. We can use this $\Delta\bm{q}$ to update the desired joint angles of the hand in the plant, which is realized through the hand's low-level PD controller.

\section{Experiments and Results}
We run both simulation and hardware experiments to evaluate the performance of our proposed method. For the algorithm implementation, we approximate the friction cones through a 12 sided pyramidal geometric approximation. Also, the constraint matrix $A_{fe}$ is computed through single value decomposition to construct the basis. For the experiments, we use the Allegro Hand V$4$ which has $4$ fingers, each with $4$ Degrees of Freedom (DoF), and we replaced the original fingertips of the hand with custom tactile sensor fingertips by Touchlab Limited Inc \cite{Touchlab}. We use MuJoCo to simulate the arm, hand, and objects' kinematics, dynamics, and contact interactions. For the hardware experiments, we use a Franka FR3 arm to position and move the Allegro hand. An Optitrack camera system is used to measure the $6D$ pose of the robotic hand, arm, and objects, which has an accuracy of $0.4mm$ and $0.05^\circ$. The interaction between the fingertips and the objects is measured using the tactile fingertips which output normal forces at the contact location. The hardware setup and experiment objects are shown in Fig.~\ref{fig:Hardware_setup}. 

In each experiment, we start by closing the multi-finger hand to grasp the object and acquire the initial tactile readings and use these initial measurement of forces to determine the contact location and normal. Then, using the object's mass, Center of Mass (CoM), and friction, and assuming a quasi-static motion, we compute the desired force, $\bm{f}_{des}$, for each contact using the Newton-Euler equations of 3D rigid-body dynamics to realize the object motion (hold the object still in case of grasping) as shown in eq.~\ref{eq:FE} and considering the minimum force constraints as in eq.~\ref{eq:FE_constraint}. While we can measure the object's mass, for CoM and friction coefficient we use an offline parameter estimation. If no good estimation of the friction coefficient or CoM is available, a more conservative estimation value that shrinks the friction-cone can be used so the final computed grasp will be inside the real friction-cone. Also, it should be noted that the object's parameters are mainly used during force planning, and only the mass is required during the force estimation algorithm. To compare the performance improvement due to the force estimator, we switch the feedback signal between the fingertips' raw measurements, $\bm{f}_{raw}$ and the estimated force values computed by our proposed force estimator, $\bm{f}_{est}$. We control the desired contact force in an admittance controller framework as described in \ref{sec:controller}. 

\begin{figure}
    \centering
    \includegraphics[width=0.75\linewidth]{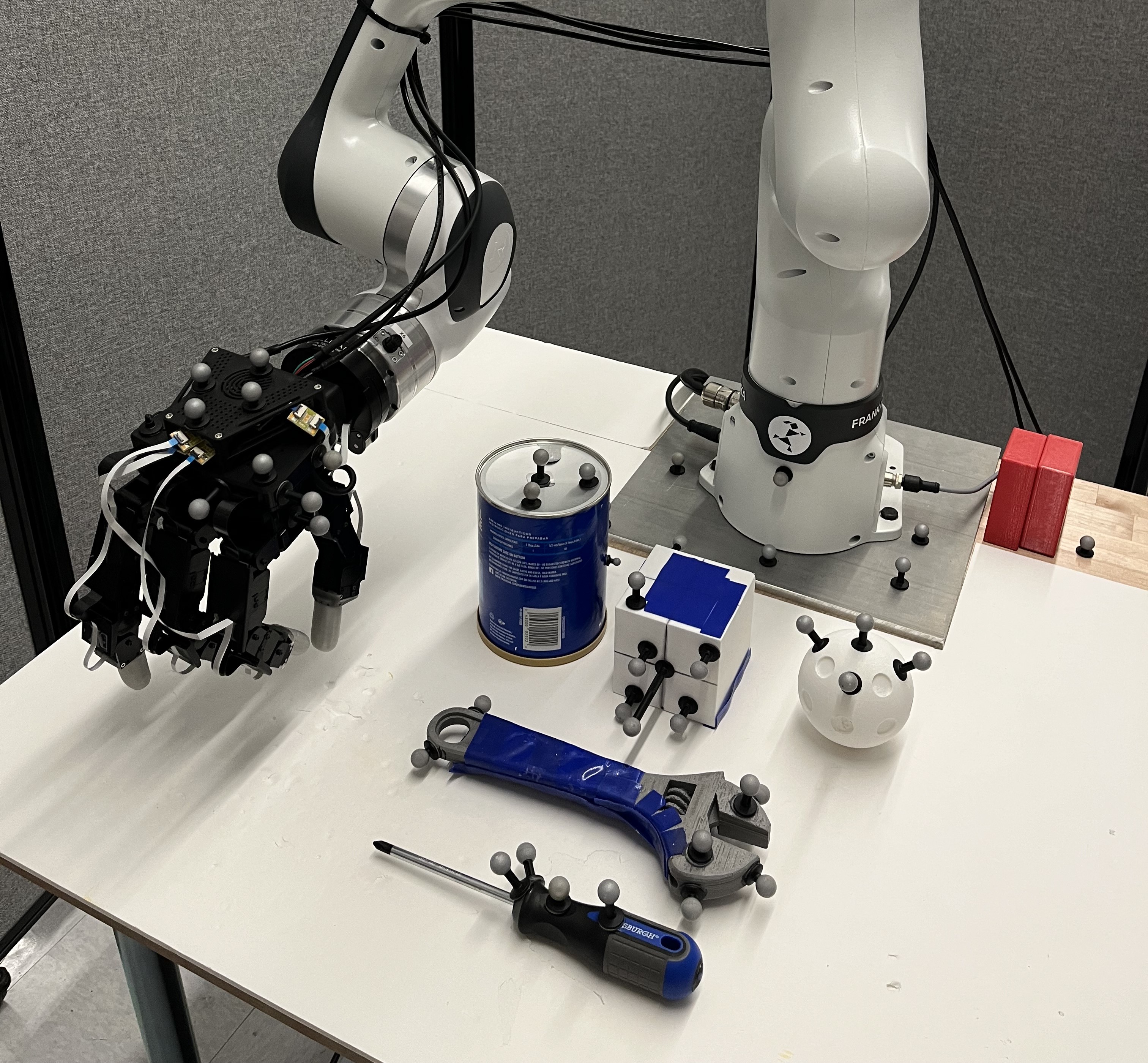}
    \caption{Hardware setup including Allegro hand equipped with Touchlab fingertips, and Franka arm. The $3D$-printed sphere and wrench, as well as the cylindrical can are used for dexterous grasping experiments, while the $3D$-printed cube and a real screwdriver are used for extrinsic manipulation experiments.}
    \label{fig:Hardware_setup}
\end{figure}

\subsection{Force Measurement via Tactile Array}
Each tactile fingertip part has an array of $42$ small 1D piezo-electric sensors (taxels) that are placed on the tip, body, and sides of the fingertip. The position of each taxel and its orientation (normal vector) relative to the fingertip base are known. We define the \textit{raw} measurement as the values provided by each taxel which represent the calibrated normal force reading of each taxel. Assuming point contact, the distributed forces on each fingertip can be represented by a single raw force, $\bm{f}_{raw}$, located at the mean location of distributed forces, i.e. weighted average of taxels positions times their measured force. Similarly, the contact normal of each fingertip is the normalized weighted average of the normal vectors times their measured force. Each taxel has been calibrated to measure a force between $0.1-20N$ under full contact conditions.

Touchlab's tactile fingertips are responsive even to small forces and their dense coverage of the fingertip surface, including sides and the top, allows a reliable tactile-based dexterous manipulation. However, despite their higher performance compared to many of the state-of-the-art commercialized tactile sensors we have tried, they are still in an early stage in terms of providing accurate force measurement. While each taxel is calibrated, due to approximations in calibration as well as the discrepancy between calibration conditions and real-world contact-rich manipulation scenarios, such as partial contact in real-world vs full-contact during calibration or contact forces not aligned with the taxel's normal (applying force at an angle), measurements have considerable errors. These discrepancies lead to measurement errors in the range of $0.1 \sim 1 N$ per taxel, which can accumulate when computing the fingertip's raw force measurement. The characteristics of individual taxels can be summarized by the following:
\begin{itemize}
    \item Activation threshold: Minimum required force that the taxel will measure ($0.1 - 0.5 N$). Effect: As additional taxels come into contact, there is a discrete jump in the measurements when a force increases over a taxel's activation threshold.
    \item Hysteresis: Force offset when releasing contact ($0.0 - 0.2 N$). Effect: As contact shifts, taxels that lose contact and should not sense any force, might output a small value.
    \item Force error: Due to partial contact or when the interaction force is not aligned with the taxel ($0.1 - 0.5 N$). Effect: When adding the taxels' forces, we have an offset between the ground truth force and the measured value.
    \item Taxel's noise: inherent noise of the piezo-electric taxel at steady state ($0.01 - 0.03 N$). Effect: No major effects once contact has been established.
\end{itemize}

We use these characteristics in simulating our tactile responses in MuJoCo to have an accurate representation of the error and noise of our tactile fingertips when testing our algorithm in simulation. 
A comparison between the ground truth of the applied force (provided by ATI Nano17 Force/Torque sensor), taxel's real-world measurement on hardware, and the simulated values in MuJoCo are shown in Fig.~\ref{fig:tactile_comparison}. As seen in this figure, the simulated tactile data is able to closely represent the characterization of the real tactile data. The main discrepancy between the simulated and hardware measurements is that we consider one uniform characterization for all the fingertip taxels in simulation independent of contact size or contact location, while in real taxels the measurement error could change based on the contact location and contact area, and the error is different between different fingertips. Nevertheless, the simulated behavior's resemblance to real-world readings allows us to test our algorithm in simulation.

\begin{figure}
    \centering
    \includegraphics[width=0.9\linewidth]{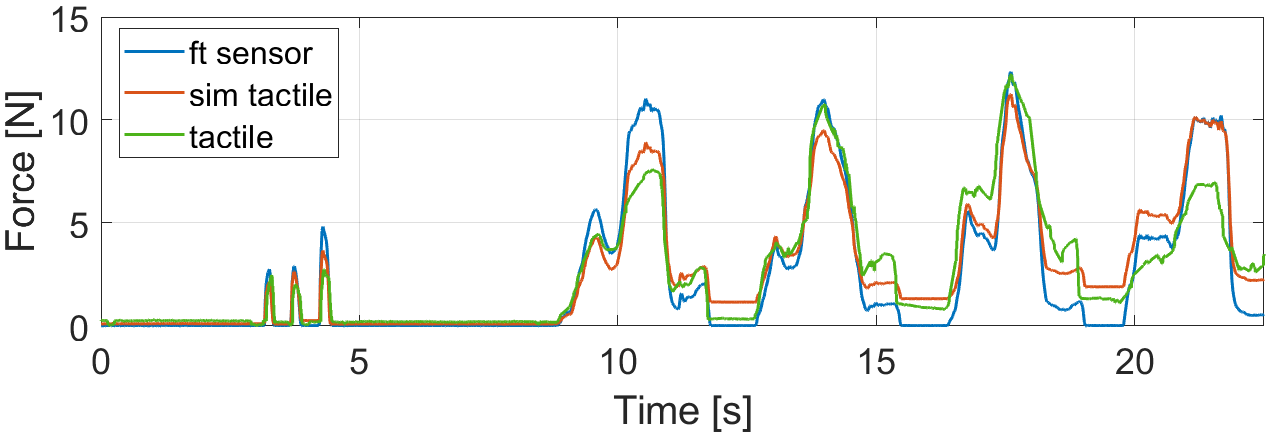}
    \caption{Contact force comparison between ground truth (F/T sensor), hardware tactile measurement and simulated tactile measurement}
    \label{fig:tactile_comparison}
\end{figure}

 

\subsection{Simulation Results}
Fig.~\ref{fig:simulation_geodex} represents examples from grasping (wrench) and extrinsic manipulation (cube) in the simulated environment in MuJoCo. Associated hardware experiments will be presented in \ref{hardware}. For the simulation, we either measured or approximated the respective objects' properties to their real-world counterpart. The simulation uses the same values as the hardware for the hand joints' PD gains. For the wrench grasping simulation, we consider a $0.3kg$ object, with a friction of $0.9$ and a tactile sensing noise variance of $0.5N$. We compared the controller when using the estimated force values against the raw measurements, with the results shown in  Fig.~\ref{fig:sim_wrench_comparison}. When using the estimated force values, the forces converge to their respective desired values. For the experiment using the raw measurements, we can see that forces are not able to converge as the thumb is under the desired force but the index and middle fingers are over the desired force. This exemplifies the problem of the error in measurements, where we need the thumb to apply more force to reach to its desired value but at the same time the index and middle finger should decrease their force. Since the thumb opposes the forces applied by the index and middle finger, thus they have to increase or decrease together, thus the equilibrium cannot be achieved.

\begin{figure}
    \centering
    \includegraphics[width=0.85\linewidth]{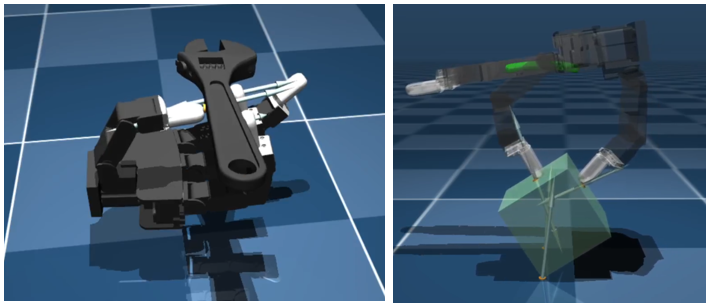}
    \caption{Simulation environment in MuJoCo for the wrench grasp and cube extrinsic manipulation. Simulated interaction forces at the contact points are displayed.}
    \label{fig:simulation_geodex}
\end{figure}
\begin{figure}
    \centering
    \includegraphics[width=0.99\linewidth]{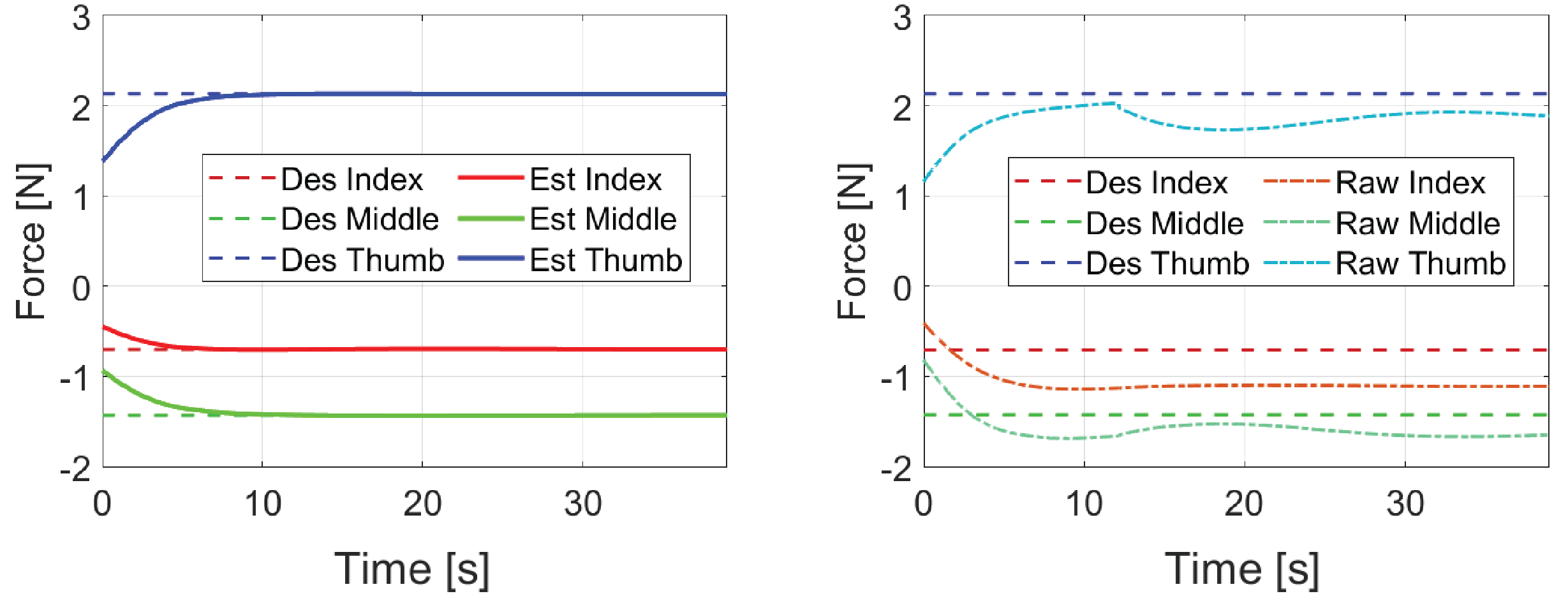}
    \caption{Grasping wrench comparison using the force estimation (left) and the raw measurements (right).}
    \label{fig:sim_wrench_comparison}
\end{figure}
For extrinsic manipulation simulation, we use an $8cm$ cube, with a mass of $0.15kg$, and friction coefficients of $0.8$ at the fingertip contacts and $0.6$ at the table contact. Similar to the wrench simulation, we consider the same hand joints' PD gains and tactile sensing noise of $0.5N$. Here we can show that our force planner can generate a sequence of fingertip contact location and forces and then have our controller follow the trajectory successfully as seen in Fig.~\ref{fig:cube_sim}. It takes a couple of seconds to accomplish the force control using the force estimation, leading to an increase in the RMS error at the start of the sequence. Once the controller converges to the desired forces, the system starts tracking the desired cube's yaw angle with an RMS error under $1^\circ$.
\begin{figure}
    \centering
    \includegraphics[width=0.6\linewidth]{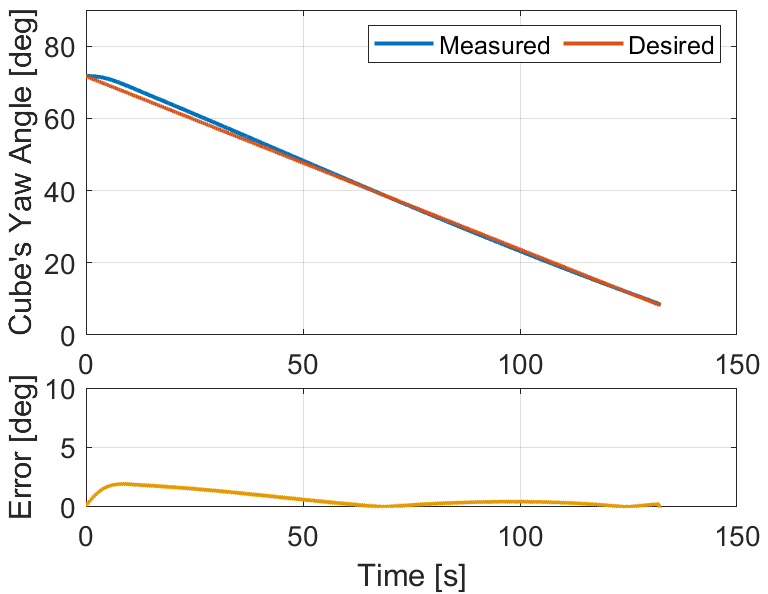}
    \caption{Cube turning in simulation}
    \label{fig:cube_sim}
\end{figure}

As part of determining the benefits of our geometric framework for force planning and estimation, we compared the computation time of our method, in simulation, to directly running optimization using Second Order Cone Programming (SOCP) at different lengths on an example grasp force planning of a 3-finger grasp experiment of an object. The results are presented in Tab.~\ref{tab:simulation_speed}. According to the results, we can see an improvement in computational speed with about a $14\times$ speed-up using our framework over the SOCP.

\begin{table}[h]
    \centering
    \begin{tabular}{c|c|c}
         Method & 100 steps & 300 steps  \\
         \hline
         SOCP & $4.81s$& $13.36s$ \\
         Geometric & $0.31s$ & $0.97$
    \end{tabular}
    \caption{Execution time comparison between SOCP and Geometric optimization.}
    \label{tab:simulation_speed}
\end{table}

\subsection{Hardware Results} \label{hardware}
\subsubsection*{\textbf{Dexterous Grasping}}
We will first introduce the underlying problem of inaccurate force measurements in hardware by presenting a 2-finger pinch grasp of a sphere. Then we will study the performance of our algorithm using two experiments, 1) a 3-finger grasp of a wrench, and 2) a 4-finger grasp of a large cylinder. With a three and four-finger grasp, we can ensure that force equilibrium can be accomplished and we can compare the controller's performance. The used grasps are shown in Fig.~\ref{fig:All_Dexterous_grasps}.

\begin{figure}
    \centering
    \includegraphics[width=0.99\linewidth]{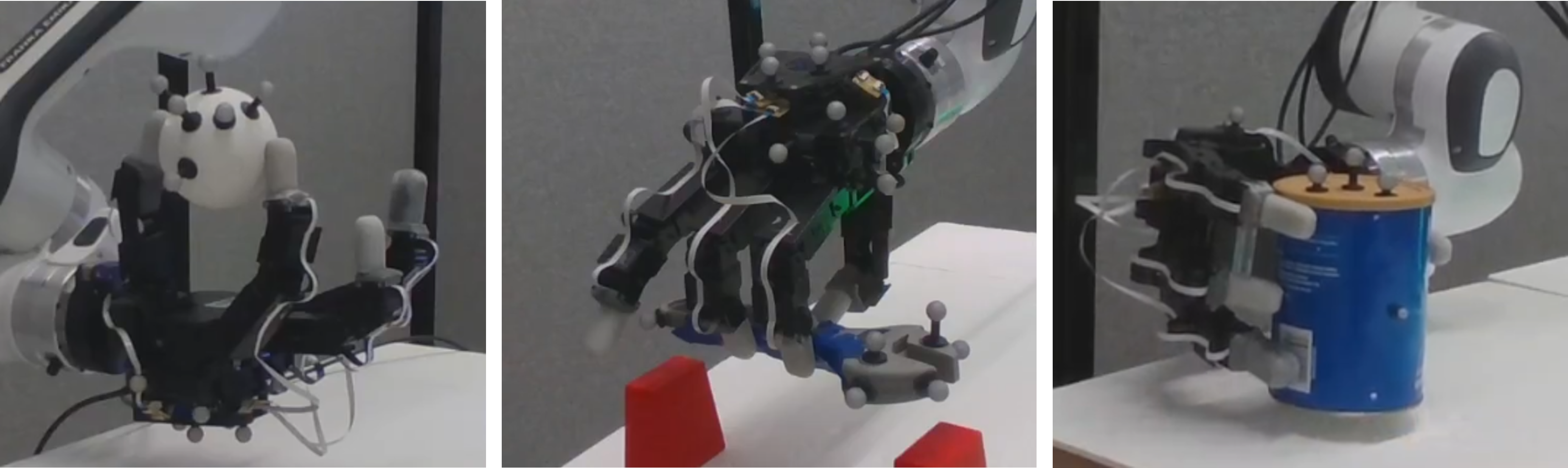}
    \caption{Successfully grasping different objects using the proposed method}
    \label{fig:All_Dexterous_grasps}
\end{figure}


\subsubsection{2-finger grasp}
To represent the underlying idea of this work in a simplified scenario, we consider a two-finger grasp of a sphere with a mass of $0.082kg$. 
To accomplish force balance for this example, we need the contact forces to have equal magnitude and opposite direction. Given the object's mass, CoM, contact locations, and friction coefficients at the contact points, we use our force planner to compute the desired force that each fingertip should apply to hold the object (as in eq.~\ref{eq:FE}) without over-pressuring it (following constraint in eq.~\ref{eq:FE_constraint})  
\begin{figure}
    \centering
    \includegraphics[width=0.99\linewidth]{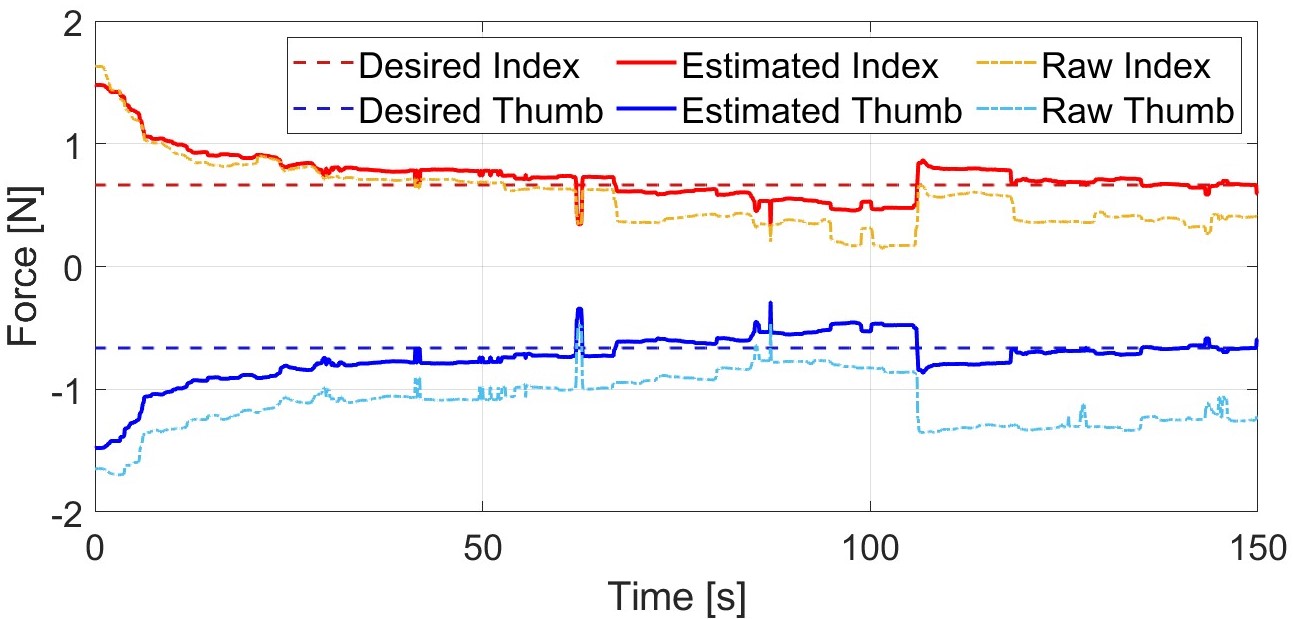}
    \caption{2-Finger grasp of sphere.}
    \label{fig:sphere_grasp}
\end{figure}
Then, during the control phase, using the force estimations as a feedback signal, the controller can control the forces to their respective desired values. As shown in  Fig.~\ref{fig:sphere_grasp}, the estimated contact forces converge to the desired value with a mean error of $0.1N$. In this case, we have achieved force balance as the estimated forces have converged to their respective values and there is no apparent motion on the object. Now, we can take a closer look at the raw measurements, which throughout the experiment show a close correlation but do not have equal magnitude. If we were to control the system using the raw measurements as the feedback signal, it becomes apparent that we would have to break the force balance that was accomplished. Analyzing the end of the experiment, we have raw measurements for the thumb $f_{thumb} \approx -1$ and index finger $f_{index} \approx 0.6$, which would lead to a control output where we want to decrease the applied force by the thumb but increase the applied force by the index. Since we only have two fingers, either both have to increase the applied force or both have to decrease it to keep the force balance. Thus, using the controller based on the raw measurements would break the force balance eventually and the object would start to move.

\subsubsection{3-finger grasp}
We use a 3D printed wrench of $300 gr$ for this experiment. Similar to the 2-finger grasp, given the object properties and contact locations, the force planner computes the desired forces required for each fingertip to hold the object in force equilibrium. 
Fig.~\ref{fig:wrench_grasp} is a representative result of this experiment. We start by controlling the system based on the estimated forces. Within the first 5 seconds, the system converges to the desired values, with a mean error between the fingertips' desired force and the estimated values of $0.09\pm 0.04 N $. We hold the grasp for $20s$ to show that force equilibrium is achieved and the object pose remains static. Then, the object is lifted successfully at the $25s$ without any change in the desired object orientation and it is placed back successfully at $35s$. At $40s$, we switch the controller to use the raw measurements as observation. We can see that while the system tries to control the contact forces of all fingers to reach to the desired values, it is not successful. It can be seen that equilibrium cannot be achieved since while the thumb and middle finger have achieved forces close to desired values within an error of $0.2\pm 0.1N$, any attempt to reduce the error of the index finger by increasing its generated force would yield a reactive force on the thumb that will push thumb away from its desired value. Thus an error between $0.5 - 0.9 N$ is observed for the index finger. This force imbalance while trying to control the raw measurements, introduces unintended object movements such as tilting and thus losing grasp while attempting to lift the object.

\begin{figure}
    \centering
    \includegraphics[width=0.99\linewidth]{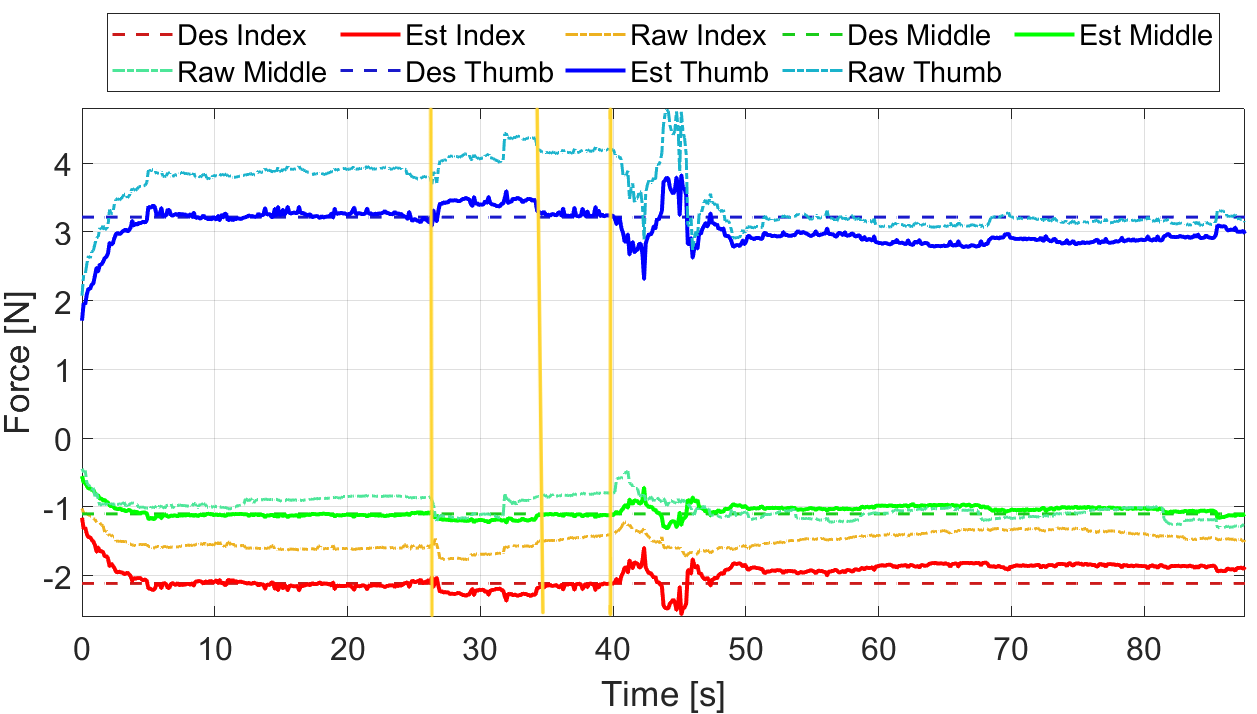}
    \caption{3-Finger grasp of wrench. From $t=0s$ to $t=27s$ we control the estimated forces to the desired values for each finger. The controller converges and keeps the forces around the desired value. From $t =27s$ we lift the object to show a successful grasp, at $t=33s$ we put the object back down and keep stable control. From $t=40s$ we try to control the raw force values. The system is not able to control all of the fingers to the desired force and the object is eventually dropped.}
    \label{fig:wrench_grasp}
\end{figure}

We performed the grasp and lift experiment of the wrench $20$ times, $10$ using the estimated force, and $10$ using the raw measurements. We will con sider a grasp to be successful if the object does not undergo a noticeable movement (more than $1^\circ$). The success rate along with the mean and standard deviation of the force error at the contact points for the success and failure cases is presented in table~\ref{tab:experiments}. 
We can see that using the estimated forces increases the success rate considerably compared to the raw measurements experiments. 
For the two experiments that failed using estimated forces, tactile sensors were not reacting to contact despite an increase in the exerted forces thus leading to overexertion of forces. In the two success cases using the raw measurements, we can see that the mean force error was very small. This can be the case when by chance the individual fingertips read similar measurements and close to the estimated values, which allow the fingers to accomplish a successful grasp.


\begin{table}[]
    \centering
    \begin{tabular}{c|c|c|r c}
        Object & Control & Success rate& \multicolumn{2}{c}{Mean \& std force error} \\
        \hline
        \multirow{4}{*}{wrench}& \multirow{2}{*}{$f_{est}$} & \multirow{2}{*}{$80 \%$}& success: &$0.09 \pm  0.04N$ \\
         &  & & failure: &$0.73 \pm  0.26N$ \\
         \cline{2-5}
         & \multirow{2}{*}{$f_{raw}$} & \multirow{2}{*}{$20 \%$}&success: &$0.19 \pm  0.16N$\\
         &  &&failure: &$0.83 \pm  0.57N$\\
         \hline
        \multirow{4}{*}{cylinder}& \multirow{2}{*}{$f_{est}$} &\multirow{2}{*}{$60 \%$}&success: &$0.35 \pm  0.17N$ \\
         & &&failure: &$0.57 \pm  0.31N$\\
         \cline{2-5}
         & \multirow{2}{*}{$f_{raw}$}&\multirow{2}{*}{$0\%$}& success: &N/A\\
         &   & &failure: &$0.82 \pm  0.45N$
    \end{tabular}
    \caption{Success rate for wrench and cylinder grasp experiments with the mean and std of the force error of the grasps when it was successful and when it failed.}
    \label{tab:experiments}
\end{table}


\subsubsection{4-finger grasp}
For this experiment, we use a cylinder with a mass of $0.360kg$ and a diameter of $0.11m$. The contact locations on the object are chosen in such a way that the controller has to account for the force balance in all directions. Similarly, we show that the system can control the estimated forces to their desired values as shown in Fig~\ref{fig:Cylinder_grasp}. 

\begin{figure}
    \centering
    \includegraphics[width=0.99\linewidth]{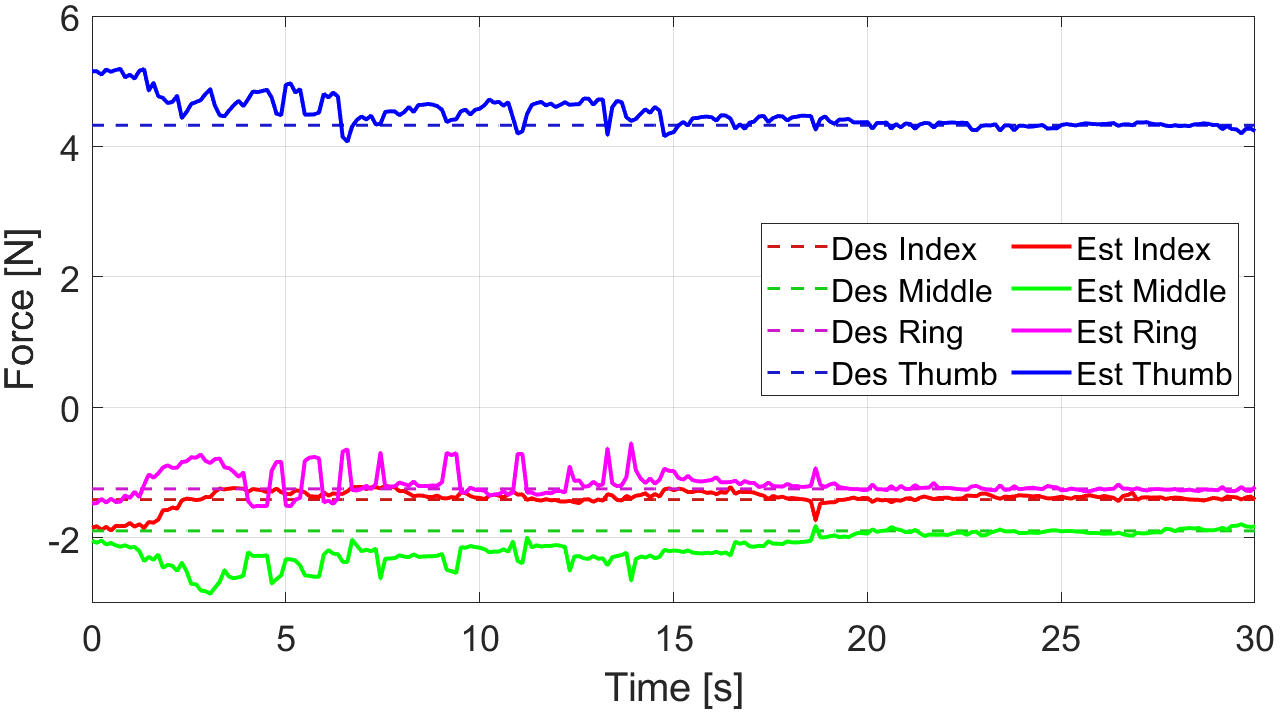}
    \caption{4-Finger grasp of cylinder.}
    \label{fig:Cylinder_grasp}
\end{figure}
We conducted the experiment $20$ times, $10$ using the force estimation, and $10$ using the raw measurements, with the results shown in table~\ref{tab:experiments}.

When using the raw measurement, the system was not able to converge to the desired grasp and resulted in a force error of around $1N$, resulting in the tilting of the cylinder or loss of the grasp when lifting. Similar to the wrench example, we observe a higher success rate when using the estimated force values. For the failed experiments using the force estimation, in three cases the fingertip sensor had a slightly larger force error, the estimated forces were not able to fully converge to their desired values and when lifting, the object experienced a small tilt of $\approx 3^\circ$, but was held successfully. For the remaining failure case, the hysteresis of multiple taxels of the index finger created the illusion of a large force being sensed making the the index finger move away until it detaches. Once the hysteresis went down, the index finger was placed back onto the cylinder, but at that moment the cylinder had already moved considerably.\\

\subsubsection*{\textbf{Extrinsic Manipulation}}
We conducted two pivoting experiments using the contact between the object and the environment: 1) a 2-finger manipulation of a cube rotating about one of its edges as shown in Fig.~\ref{fig:Cube_hardware}; and 2) a 2-finger manipulation of a screwdriver pivoting about its tip, shown in Fig.~\ref{fig:screwdriver_hardware}. While we use a pinch grasp for both experiments, the interactions between the object and the environment are quite distinct. 
In both experiments, we use the tactile information to estimate the contact location on the object, then compute the force equilibrium considering the intrinsic contact (fingertips-object) and the extrinsic contact (object-table). The goal is for the objects to rotate about a pivot axis on the table. To accomplish this, using the distance between the pivot point and the contacts, the algorithm precomputes a trajectory for the base of the robotic hand to keep the object inside the fingers' workspace as we move the object to the desired pose. Given the initial angle of the object and the desired final angle, we plan the forces needed to accomplish the goal at each intrinsic contact point throughout the trajectory.

\begin{figure}
    \centering
    \includegraphics[width=1\linewidth]{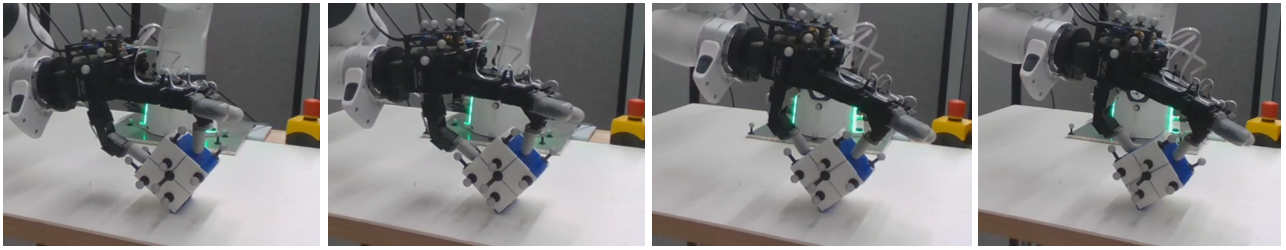}
    \caption{Cube turning hardware experiment.}
    \label{fig:Cube_hardware}
\end{figure}
\begin{figure}
    \centering
    \includegraphics[width=1\linewidth]{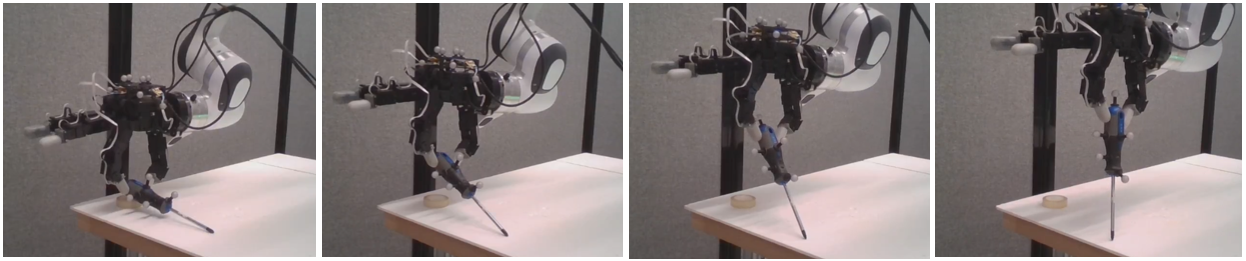}
    \caption{Screwdriver turning hardware experiment.}
    \label{fig:screwdriver_hardware}
\end{figure}

For the cube experiments, we rotate it about the yaw direction from the initial angle position of $58^\circ$ to $30^\circ$. We also tested rotations beyond $30^\circ$ but due to the friction between the cube and the table, the cube would start to slip on the table and contact would be lost, thus we keep the analysis up to the $30^\circ$ angle. We present $14$ experiments of the cube rotation, $7$ using the force estimation, and $7$ using the raw values. The mean trajectory for these experiments are shown in Fig.~\ref{fig:cube_results}. In all of these experiments, the hand follows a predefined trajectory about $2/3$ of the desired object's trajectory. This is done to show that the rotation is not only enabled by the arm moving, but the finger's controller also keeps track of the contact planned trajectories and forces. 

When using force estimation, the fingers converge to the planned desired forces and bring the cube closer to the desired angle. From the error plot in Fig.~\ref{fig:cube_results}, we can see that the angular error during the trajectory tracking is kept under $5^\circ$. For the experiments using the raw measurements, we can see that the overall error with respect to the trajectory is larger, with values between $5^\circ$ and $10^\circ$. Once the arm stops moving, the fingers are not able to move the object closer to the desired angle effectively.
\begin{figure}
    \centering
    \includegraphics[width=0.95\linewidth]{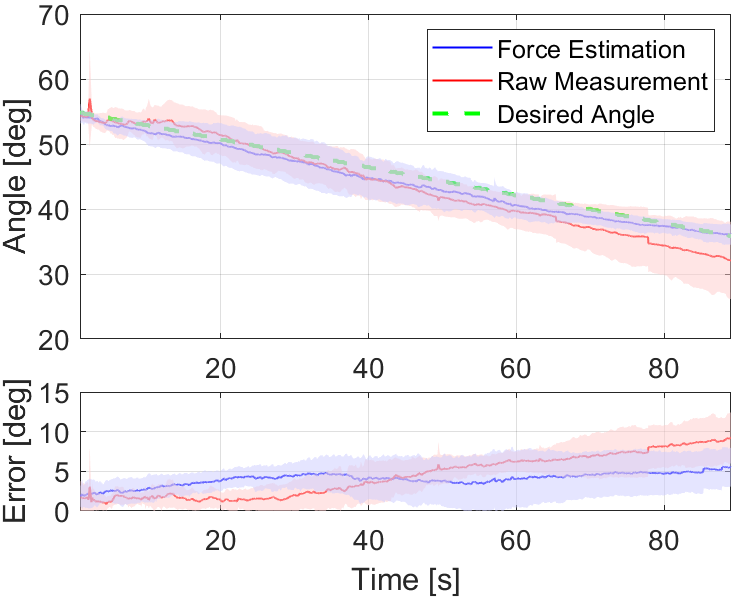}
    \caption{Mean (solid lines) and standard deviation (transparent areas) of yaw angles for cube turning experiments. Blue corresponds to the experiments using force estimation, while the red corresponds to experiments using raw measurements.}
    \label{fig:cube_results}
\end{figure}

For the screwdriver experiments, the screwdriver has a mass of $0.3kg$ and we have an approximate value for the friction coefficient of $0.6$. We tested $14$ instances of the screwdriver lifting experiment, with $7$ experiments using the force estimation and $7$ experiments using the raw force measurements. The mean trajectory for these experiments are shown in Fig.~\ref{fig:screwdriver_results}. When using the estimated forces, the system was successful in controlling the applied force while moving the hand to the final position. Fingers were able to keep the grasp, while the screwdriver's tip was always in contact with the table. For all the experiments using the estimated forces, the final angular error was under $7^\circ$. When using the raw measurement, the controller had trouble converging to the desired forces making the screwdriver rotate about its longitudinal axis. In two of these experiments, fingertips would end up lifting the screwdriver's tip from the table and eventually dropping the object as seen in the sudden drops of the angular position. For the other experiments where the screwdriver was not dropped, the grasp had shifted considerably, rotating the tool about its longitudinal axis, making further manipulation of the object very difficult.


\begin{figure}
    \centering
    \includegraphics[width=0.95\linewidth]{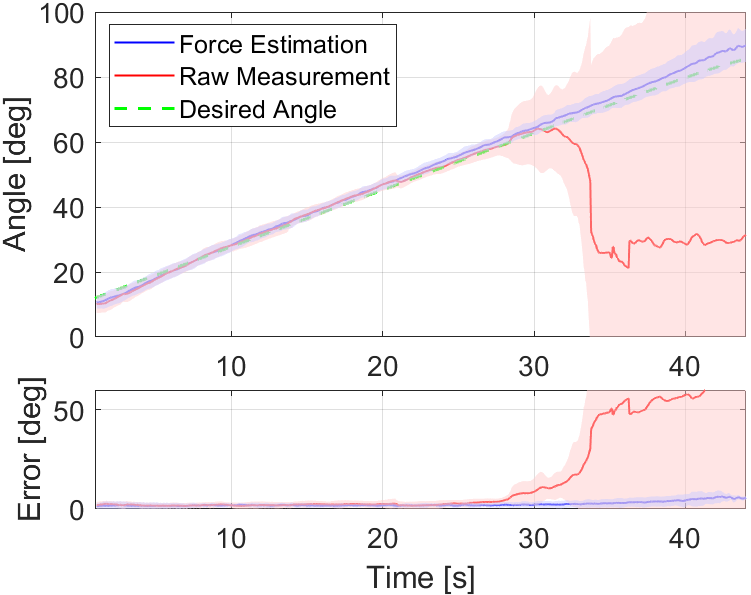}
    \caption{Mean (solid lines) and standard deviation (transparent areas) of angles for screwdriver turning experiments. Blue corresponds to the experiments using the force estimation, while the red correspond to experiments using raw measurements.}
    \label{fig:screwdriver_results}
\end{figure}

Thus, through these experiments, we showed that by using the force estimation to account for measurement noise and errors, the hand can perform manipulation of the object with high success, even when considering extrinsic contacts. 

\section{Discussion}
Despite the advancement of tactile sensors over the past years, their force-sensing capabilities are still in an early stage. However, assuming access to high-resolution tactile arrays with low thresholds of activation, and considering their ability to be placed on various surfaces such as robotic fingers, these sensors can be quite useful in dexterous manipulation applications. Thus, expanding the areas of application of these sensors and enabling force control in dexterous manipulation, has been our motivation for this work.

By presenting the characterization of the calibrated tactile sensors we have used, we intended to provide insight into the nature of measurements produced by these sensors. As seen in the tactile sensing section, while measured force is similar in nature to the ground truth value, the error is unpredictable as it depends on the contact location, applied pressure, direction of contact, and is different for individual taxels due to the slight differences in activation threshold, hysteresis, calibration, and manufacturing conditions. Thus, modeling and accounting for the error is extremely difficult, if not impossible. 

We have shown that controlling the system by relying on the raw force measurements is prone to failed grasps due to a lack of force equilibrium and overexerting of forces onto the object. While it might be possible to achieve successful grasps through compliance of the contacts or controller, it does not allow precise force control. In our work, we have shown that it is possible to use the raw measurements, to estimate an approximation of the true force that enables the robot to successfully control the applied force by a set of fingertips and achieve force equilibrium. As seen in the experimental results of grasping the wrench and the cylinder, we have considerably improved the success rate of grasping an object. The tactile sensor allows us to have a good contact location and direction with the object at each fingertip. We can use this contact location, along with the object parameters to compute the optimal force needed to grasp the object in force equilibrium, such that when lifted, the object will not slip out of the grasp or lose its desired orientation. 

The main reason for the inability of raw force measurements to accomplish equilibrium is the different error behaviors of the fingers which also change among interactions. Even when we start at force equilibrium (as done in the wrench experiment in Fig.~\ref{fig:wrench_grasp}, at the $40 s$ mark), if the raw measurements' errors are different, then the controller will have to apply different adjustments to each finger, thus breaking the force equilibrium and resulting in undesired object movement or loose grasp and lost contact. However, depending on the force error, there may be scenarios where the grasp can be accomplished, as seen in some of the raw measurement control experiments. In these cases, different fingers experienced similar errors, which allowed converges to the desired values. These grasps were probably over-exerting force but were still successful. Similarly, it is possible for the controller to fail when using the force estimations as seen in some of our experiments. For these failure cases, the main element at fault was the saturation of the tactile sensors of one or more fingertips. Due to the contact direction of the fingers, in some of the experiments, the raw measurement was saturated at a certain level despite the torque applied by the finger joint. While the other fingers might sense an increase in force which would be accounted for by the force estimator, the error in measurement increased beyond the modeled noise levels, leading to inaccurate behavior, overexerting forces, and breaking the force balance.  While we could increase the model's noise to account for these errors, this would also affect the value of the desired force, since the modeled error is higher, we have to increase the desired force to stay further away from the edges of the friction cone to accomplish a successful grasp. 

For the extrinsic force estimation, we have shown that our method can combine the noisy force measurements at contacts along with unknown forces at the contact between the object and the environment. In the cube experiment, we have accomplished force equilibrium considering the unknown environmental contact forces between the cube and the table. For the screwdriver, both fingers need to accomplish force balance such that they are able to lift the object but not over-exert force to avoid a torsional motion about the contact axis.

\section{Limitations and Future Work}
In this work, we assume that the contact location and contact normals are known and are fairly accurate. This assumption is not always true as different tactile elements might have different measurement errors, thus affecting the accuracy of computed center of pressure (contact location) and average normal, calculated based on weighted sum of individual readings. In our future work, we would like to consider that the contact location and direction have noise and need to be accounted for when computing the force equilibrium ellipsoid. Furthermore, tactile sensors' behaviors like impulse noise or interference were not modeled in our simulation. While the hardware experiments showed robustness to these elements, we would like to consider them in our future simulations.

Additionally, in this work we assumed quasi-static motions and static grasps where the object is in force equilibrium. In our future work, we would like to apply this to more dynamic motions as well as scenarios where applying an external wrench is desired, e.g. applying torque to turn a bolt by the wrench.

\section{Conclusion}
We have presented GeoDEx, a unified framework that can take inaccurate force measurements to plan, estimate, and control during dexterous manipulation. Given the object properties, such as weight, CoM, and friction, along with the tactile force measurements, our method computes the optimal force required to accomplish a grasp of an object in force equilibrium, then estimates the current applied force at each fingertip in contact, and uses this state estimation to control the system successfully. We have presented hardware experiments for two-, three-, and four-finger grasps with the hand at different configurations, showing a substantial improvement in grasp success rate compared to using direct force measurements provided by the calibrated tactile sensors. Our method can also take into account extrinsic contacts between the object and the environment and combine their effect when planning and estimating contact forces on the fingertips. We have shown two distinct examples of extrinsic manipulation using the table surface to manipulate a cube and a screwdriver successfully.









\bibliographystyle{unsrt}
\bibliography{biblio}



\end{document}